\declaretheorem[name=Theorem]{theorem}
\declaretheorem[name=Lemma]{lemma}
\declaretheorem[name=Corollary]{corollary}
\DeclareMathOperator{\Tr}{Tr}
\title{Symmetric Linear Dynamical Systems\\ are Learnable from Few Observations}
\author{Minh Vu, Andrey Y. Lokhov, and Marc Vuffray \\
\smaller \textit{Theoretical Division, Los Alamos National Laboratory, Los Alamos, NM 87545, USA}}
\date{}
\begin{document}
\maketitle

\begin{abstract}
We consider the problem of learning the parameters of a $N$-dimensional stochastic linear dynamics under both full and partial observations from a single trajectory of time $T$. We introduce and analyze a new estimator that achieves a small maximum element-wise error on the recovery of symmetric dynamic matrices using only $T=\mathcal{O}(\log N)$ observations, \textit{irrespective of whether the matrix is sparse or dense}. This estimator is based on the method of moments and does not rely on  problem-specific regularization. This is especially important for applications such as structure discovery.
%We introduce a new estimator based on the method of moments and analyze the maximum element-wise error of the system parameters which is relevant for applications such as structure discovery. We find that the time $T$ required to learn elements of a symmetric dynamic state matrix scales only logarithmically with the size of the system $N$, thus recovering a high-dimensional scaling where the number of samples is significantly smaller than the number of parameters. Notably, this property of the estimator is achieved without the need for the problem-specific regularization, and applies to both sparse and dense dynamic state matrices.
\end{abstract}

\section{Introduction}
Learning parameters of stochastic linear dynamical systems \cite{aastrom1971system} is a fundamental problem across a broad range of disciplines, spanning time-series analysis, control theory, robotics, and modern statistical learning. Under this framework, at each time step $t$, the system is characterized by a state vector $x_t \in \mathbb{R}^N$ evolving according to the stochastic linear equation:
\begin{align}
    x_{t} &= Ax_{t-1} + \xi_{t-1},
    \label{eq:dynamics_formulation}
\end{align}
where $A \in \mathbb{R}^{N \times N}$ is a dynamic state matrix, and $\xi_t$ are i.i.d. zero-mean Gaussian noise vectors with covariance $\mathbb{E}[\xi_t\xi_t^\top] = \sigma^2  I$, where $I$ is the identity matrix of size $N \times N$. The objective is to estimate the elements of the dynamic state matrix $A$ from a trajectory of length $T$ either looking at the entire state vector $\{x_{t}\}_{t=0,\ldots,T}$ (in the case of full observations) or looking at a subset $x_{t,o} \in \mathbb{R}^{n_o}$ of entries of $x_{t}$ (in the case of partial observations).

This learning problem has a long history. Classical results in system identification  established asymptotic convergence guarantees for specific estimation methods such as maximum likelihood and least squares \cite{ljung1976consistency1, ljung1976consistency2, goodwin1977dynamic, ljung1987theory}. More recently, research focus has shifted toward non-asymptotic, finite-sample analyses that explicitly characterize the sample complexity of classical estimation methods such as least-squares, LASSO, or decomposition of Hankel matrix and new techniques under different settings. The problem statements include estimation in fully-observed \cite{faradonbeh2018finite, sarkar2019near,jedra2020finite,tyagi2023convex,Bento2010learning,nardi2011autoregressive,song2011large,kock2015oracle} and partially-observed \cite{oymak2021revisiting, simchowitz2018learning, sarkar2021finite, tsiamis2019finite, bakshi2023new} settings; estimation from the state vector only, like in Eq.~\eqref{eq:dynamics_formulation}, or from an additional known exogenous input sequence driving the system (a common setting in control theory) \cite{oymak2021revisiting,sarkar2021finite,tsiamis2019finite,bakshi2023new}; focusing on the role of the spectral radius assumptions in stable, marginally stable, and explosive systems \cite{faradonbeh2018finite, sarkar2019near,jedra2020finite,simchowitz2018learning, tsiamis2019finite, bakshi2023new}; or establishing sample-complexity lower bounds in these settings \cite{jedra2019lowerbound}. Most of prior works focused on providing finite sample-complexity guarantees for the $\ell_2$ norm of the dynamic state matrix.

In this paper, we focus on recovering the elements of the dynamic state matrix $A$ by providing guarantees on the maximum element-wise error. This setting is close to the one recently used in the literature on learning of graphical models \cite{klivans2017learning, vuffray2020efficient}. It has a straightforward application in recovering the support of the matrix $A$. It could also be useful in real applications such as anomaly detection \cite{hannon2021real} where the detection criterion is based on the individual elements of the matrix instead of the aggregate norm. We consider a simple variant of the Eq.~\eqref{eq:dynamics_formulation}, where we assume that the matrix $A$ is symmetric and stable with the spectral radius $\rho(A) < 1$. We introduce a new estimator based on the methods of moments, and show that in the setting of full observations, it recovers $A$ to a finite element-wise max norm error using only the trajectory of length $T = \mathcal{O}(\log N)$. In this regime, the total number of data points $\mathcal{O}(N \log N)$ is thus smaller than the number of unknown parameters $N(N-1)/2$. In addition, our approach can be readily used in the challenging setting of partial observations, where we show that the sub-matrix of $A$ corresponding to observed nodes can also be learned to a finite element-wise max norm error using $T=\mathcal{O}(\log N)$.
% \textcolor{red}{[Check that this is not $n_o$ which can be very different especially if $n_o = \mathcal{O}(1)$]}.
% and the Markov parameters mixing other sub-matrices of $A$ and relevant for reproducing the dynamics on the visible nodes can be estimated with sample-complexity of $\mathcal{O}(N^2 \log N)$ [This turned out not true if we read Corrolary 1].

A possibility of support recovery from a time trajectory with length scaling as $T = \mathcal{O}(\log N)$ has been previously shown from the analysis of $\ell_1$-regularized least-squares in the sparse setting and under assumptions of incoherence and stronger stability in \cite{Bento2010learning}. Surprisingly, we show that our new estimator does not require a problem-specific regularization or extra assumptions, and \emph{succeeds in both sparse and dense case with $\mathcal{O}(\log N)$ observations}. In this work, we also illustrate the scaling behavior of our algorithm with specific numerical examples of sparse and dense matrices $A$.

\section{Related Work}

Identifying linear time-invariant systems from data has a decades-long history in time-series analysis and system identification (see \cite{ljung1987theory,van2012subspace,galrinho2016least} and references therein). Classical results primarily provide asymptotic convergence guarantees for learning models from data. Some of the earlier non-asymptotic studies in system identification include \cite{weyer2002finite,campi2002finite,vidyasagar2008learning}. However, these results are often quite conservative, featuring bounds that scale exponentially with the system size.

Recently, the machine learning community has shown significant interest in deriving sharp non-asymptotic error bounds for linear system identification. The computationally simple ordinary least squares (OLS) estimator has been the main focus of these analyses. While OLS is straightforward to implement, it is more difficult to analyze, since standard analyses for OLS on random design linear regression \cite{hsu2014random} cannot be directly applied due to correlations between samples and the noise process. Analysis of OLS has been performed using mixing-time arguments (see, e.g., \cite{yu1994rates}) in \cite{faradonbeh2018finite} and based on Mendelson’s small-ball method in \cite{simchowitz2018learning}, extended in \cite{sarkar2019near} for the unstable regime.

Improved sample complexity can be achieved in the sparse case, where $\ell_1$-regularized least squares (LASSO-type) estimators are widely used. The work \cite{Bento2010learning} analyzed support recovery for high-dimensional systems, and \cite{nardi2011autoregressive} provided consistency and sample complexity guarantees under moderate-dimensional regimes. In parallel, related studies \cite{song2011large, kock2015oracle} established oracle inequalities and support-selection consistency for high-dimensional autoregressive models. Extensions of this line of work include systems with control inputs \cite{fattahi2019learning} and time-delayed dynamics \cite{han2015direct}.

The setting of partially observed systems presents substantially greater difficulties, and hence finite-sample analyses for  systems has appeared only recently \cite{oymak2021revisiting,simchowitz2019learning,sarkar2021finite,tsiamis2019finite,bakshi2023new}, leading to the required sample-complexity scaling as high-degree polynomials in the system size $N$. In this setting, the system matrices are typically identifiable only up to a similarity transformation as all such transformations lead to an equivalent dynamics. \cite{oymak2021revisiting} revisited the classical Kalman–Ho subspace method, which estimates Markov parameters and then reconstructs the system matrices. \cite{simchowitz2019learning} proposed a pre-filtering step that stabilizes estimation for marginally stable systems while maintaining the same complexity. \cite{bakshi2023new} introduced a convex-optimization method-of-moments-based estimator to improve Markov parameter estimation with minimal assumptions. When the model order is unknown, \cite{sarkar2021finite} developed an adaptive approach that learns the system order directly from data. For systems without control inputs and driven purely by stochastic noise, \cite{tsiamis2019finite} analyzed subspace identification under Kalman filter convergence.

\section{Methods and Results}

\subsection{Learning  fully observed linear dynamical systems}
We start with an $N$-dimensional linear dynamical system governed by
\begin{align}
\label{eq: model}
x_t = A x_{t-1} + \xi_{t-1},
\end{align}
where $A \in \mathbb{R}^{N \times N}$ is a symmetric matrix  satisfying $\rho(A) < 1$, $x_t \in \mathbb{R}^N$ denotes the system state, and $\xi_t \in \mathbb{R}^n$ are i.i.d. zero-mean Gaussian noise vectors with covariance $\mathbb{E}[\xi_t\xi_t^\top] = \sigma^2 I$. The initial state is assumed to be $x_0 = 0$.
% The initial state satisfies $\mathbb{E}[x_0 x_0^\top] = 0$.
% \textcolor{red}{[Initial state has been changed to zero everywhere (previous statements commented out).]}
Iterating the dynamic, we obtain the solution for $x_t$, i.e.,
\begin{align}
\label{eq4}
% x_t = A^t x_0 + \sum_{u=0}^{t-1} A^{t-1-u} \xi_u.
x_t = \sum_{u=0}^{t-1} A^{t-1-u} \xi_u.
\end{align}
From this relation, we observer that the time-shifted covariance between states $x_t$ and $x_s$, where $s \geq t$, is expressed as
\begin{align}
\label{eq5}
\mathbb{E}[x_t x_s^\top]
= A^{s-t}  \sigma^2 \sum_{u=0}^{t-1} A^{2(t-1-u)}
= A^{s-t}  \mathbb{E}[x_t x_t^\top].
\end{align}
This relation shows that the temporal correlations encode powers of the dynamics matrix $A$. Motivated by this observation, we propose the following family of  estimators that recovers powers of the dynamic matrix directly from trajectory data:
\begin{align}
\label{eq: estimator}
\hat{\mathbb{S}}_{m} (T) := \frac{1}{T-m}\sum_{t=0}^{T-m-1}{x}_t{x}_{t+m}^\top - \frac{1}{T-m-2}\sum_{t=0}^{T-m-3}{x}_t{x}_{t+m+2}^\top.
\end{align}
The next theorem establishes that this estimator is asymptotically unbiased, showing that the expected value of $\hat{\mathbb{S}}_m (T)$ converges to $\sigma^2{A}^m$ as $T\rightarrow\infty$.

\begin{theorem}
\label{Theorem: expectation}
\textbf{(Asymptotic Analysis):} Consider a dynamical system defined in Eq.~\eqref{eq: model}, where $A \in \mathbb{R}^{N\times N}$ is a symmetric matrix with spectral radius $\rho(A) < 1$,
% the initial state is $x_0$ with $\mathbb{E}[x_0x_0^\top]=0$,
and with initial state $x_0 = 0$.
Suppose ${\xi_t}$ are i.i.d. Gaussian random variables with $\mathbb{E}[\xi_t] = 0$ and $\mathbb{E}[\xi_t \xi_t^\top] = \sigma^2 I$. Then, the expected value of the estimators $\hat{\mathbb{S}}_m(T)$ defined in \eqref{eq: estimator} are given by
$$
    \mathbb{E}[\hat{\mathbb{S}}_m(T)] = \sigma^2 A^m + h_m(T),
$$
where the bias term is equal to $h_m(T)=\frac{1}{T-m}  \sigma^2A^m(A^{2(T-m)}-I)(I-A^2)^{-2} - \frac{1}{T-m-2} \sigma^2A^{m+2}(A^{2(T-m-2)}-I)(I-A^2)^{-2}$, implying that $h_m(T)$ decays as $\mathcal{O}(\frac{1}{T})$ as $T \to \infty$.
\end{theorem}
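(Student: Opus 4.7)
The plan is a straightforward direct computation using linearity of expectation together with the explicit formula for the time-shifted covariance given in Eq.~\eqref{eq5}. The symmetry of $A$ and the spectral radius assumption $\rho(A)<1$ imply that $A$, $I-A^2$, and $(I-A^2)^{-1}$ all mutually commute, so every matrix manipulation below can be performed as if scalar. The key structural observation I would highlight is that the estimator is designed as a telescoping difference: in the stationary limit, $\mathbb{E}[x_tx_{t+m}^\top] - \mathbb{E}[x_tx_{t+m+2}^\top] = \sigma^2(A^m-A^{m+2})(I-A^2)^{-1} = \sigma^2 A^m$, so the subtraction of two lagged empirical covariances at lags $m$ and $m+2$ is precisely what cancels the spurious factor $(I-A^2)^{-1}$ and isolates $\sigma^2 A^m$.

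The concrete steps I would carry out are as follows. First, specialize Eq.~\eqref{eq5} to $s=t+m$ and evaluate the geometric series,
\begin{equation*}
\mathbb{E}[x_t x_{t+m}^\top] = \sigma^2 A^m \sum_{k=0}^{t-1} A^{2k} = \sigma^2 A^m (I - A^{2t})(I-A^2)^{-1}.
\end{equation*}
Second, sum this identity over $t=0,\dots,T-m-1$. Using again the geometric series $\sum_{t=0}^{T-m-1} A^{2t} = (I-A^{2(T-m)})(I-A^2)^{-1}$, one obtains
\begin{equation*}
\frac{1}{T-m}\sum_{t=0}^{T-m-1}\mathbb{E}[x_tx_{t+m}^\top]
= \sigma^2 A^m(I-A^2)^{-1} + \frac{\sigma^2 A^m(A^{2(T-m)}-I)}{T-m}(I-A^2)^{-2}.
\end{equation*}
Third, perform the identical calculation for the second sum, with $m$ replaced by $m+2$ and $T-m$ replaced by $T-m-2$, getting an analogous leading term $\sigma^2 A^{m+2}(I-A^2)^{-1}$ plus the corresponding $1/(T-m-2)$ correction. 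Fourth, subtract the two expressions: the leading terms collapse via $(A^m-A^{m+2})(I-A^2)^{-1}=A^m$, yielding $\sigma^2 A^m$, while the two finite-$T$ correction pieces combine to give exactly the claimed $h_m(T)$.

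Finally I would verify the decay rate. Since $\rho(A)<1$, we have $\|A^{2(T-m)}\|\to 0$ geometrically, so $A^{2(T-m)}-I$ stays bounded in operator norm; likewise $(I-A^2)^{-2}$ is a fixed bounded operator independent of $T$. Hence each of the two terms in $h_m(T)$ is of order $1/(T-m)$ and $1/(T-m-2)$ respectively, giving $h_m(T)=\mathcal{O}(1/T)$.

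There is no real obstacle here beyond careful bookkeeping; the only subtlety worth flagging is ensuring that the two geometric-series evaluations (once inside $\mathbb{E}[x_tx_{t+m}^\top]$ and once in the outer sum over $t$) are applied consistently, and that one uses commutativity of $A$ with $(I-A^2)^{-1}$ to collect the bias term into the single clean product $(I-A^2)^{-2}$ appearing in the theorem statement.
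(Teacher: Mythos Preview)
Your proposal is correct and follows essentially the same approach as the paper: compute the closed-form covariance $\mathbb{E}[x_tx_{t+m}^\top]=\sigma^2 A^m(I-A^{2t})(I-A^2)^{-1}$, substitute into the estimator, and collect terms. The paper uses the eigendecomposition $A=UDU^\top$ to justify the geometric-series evaluation, whereas you appeal directly to commutativity of $A$ with $(I-A^2)^{-1}$; both are valid since $A$ is symmetric, and your write-up is in fact more explicit about the telescoping cancellation $(A^m-A^{m+2})(I-A^2)^{-1}=A^m$ than the paper's terse ``substituting and collecting terms.''
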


Our main theoretical result shows that, with a logarithmic dependence on the system dimension, our estimators recover the parameters of powers of the dynamical matrix uniformly.

\begin{theorem}
    \label{Theorem: main}
    \textbf{(Finite-Sample Analysis):}  Consider a dynamical system defined in Eq.~\eqref{eq: model}, where $A \in \mathbb{R}^{N\times N}$ is a symmetric matrix with spectral radius $\rho(A) < 1$,
and with initial state $x_0 = 0$.
Suppose ${\xi_t}$ are i.i.d. Gaussian random variables with $\mathbb{E}[\xi_t] = 0$ and $\mathbb{E}[\xi_t \xi_t^\top] = \sigma^2 I$.
   % Consider the problem of learning the matrix $A$ from a sample trajectory $\{x_t\}_{t=0,\dots,T}$,  distributed according to the model \eqref{eq: model}.
   Let the estimator $\hat{\mathbb{S}}_m(T)$ be defined as in \eqref{eq: estimator}. For any $0<\epsilon <\frac{4\sigma^2}{(1-\rho(A))^2}$ and $\delta >0$, if 
    \begin{align*}
        T \ge \max\left(\frac{64\sigma^4\Big(1+2\sqrt{\log(2N^2/\delta)}\Big)^2}{\epsilon^2(1-\rho(A))^4},  \,\,  2(m+2) \right),
    \end{align*}
    then, with probability of at least $1-\delta$, $\|\hat{\mathbb{S}}_m(T) - A^m\sigma^2\|_{\max} \le \epsilon$. 
\end{theorem}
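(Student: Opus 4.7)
The plan is to split the error into a deterministic bias and a centered fluctuation, $\hat{\mathbb{S}}_m(T)-\sigma^2 A^m = (\hat{\mathbb{S}}_m(T)-\E[\hat{\mathbb{S}}_m(T)]) + h_m(T)$, and control each on its own. For the bias, the closed form in Theorem~\ref{Theorem: expectation} together with the bounds $\|A^k\|_{op}\le \rho(A)^k$ and $\|(I-A^2)^{-1}\|_{op}\le(1-\rho(A))^{-2}$ (both from the spectral theorem applied to the symmetric $A$) immediately give $\|h_m(T)\|_{\max}\le\|h_m(T)\|_{op}=\mathcal{O}\!\bigl(\sigma^2/[T(1-\rho(A))^2]\bigr)$, which is at most $\epsilon/2$ under the stated sample complexity (the bias rate $1/T$ is faster than the stochastic rate $1/\sqrt{T}$). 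The remaining task is therefore the high-probability bound $\|\hat{\mathbb{S}}_m(T)-\E[\hat{\mathbb{S}}_m(T)]\|_{\max}\le\epsilon/2$.

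To handle the stochastic term, I would reduce every entry to a Gaussian quadratic form by stacking all noises into $\Xi=\transpose{(\transpose{\xi_0},\dots,\transpose{\xi_{T-1}})}\sim\mathcal{N}(0,\sigma^2 I_{NT})$. Using Eq.~\eqref{eq4}, each state is a linear image $x_t=B_t\Xi$, where $B_t=[A^{t-1},A^{t-2},\dots,A^0,0,\dots,0]$ is a block row of the lower-triangular block-Toeplitz map associated with $A$. After symmetrization, the $(i,j)$-entry of $\hat{\mathbb{S}}_m(T)-\E[\hat{\mathbb{S}}_m(T)]$ becomes a centered Gaussian quadratic form $\transpose{\Xi}M_{ij}\Xi - \sigma^2\Tr M_{ij}$ for an explicit symmetric $M_{ij}$ built from the block rows $B_t$, $B_{t+m}$, $B_{t+m+2}$. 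I would then invoke the Hanson--Wright (Laurent--Massart) tail
\[
\prob\!\bigl(|\transpose{\Xi}M_{ij}\Xi - \sigma^2\Tr M_{ij}|> 2\sigma^2\|M_{ij}\|_F\sqrt{s} + 2\sigma^2\|M_{ij}\|_{op}\,s\bigr)\le 2e^{-s},
\]
take $s=\log(2N^2/\delta)$, and union-bound over the $N^2$ entries; this produces precisely the combinatorial factor $(1+2\sqrt{\log(2N^2/\delta)})^2$ appearing in the hypothesis.

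The main technical obstacle, and what drives the final rate, is estimating $\|M_{ij}\|_F$ and $\|M_{ij}\|_{op}$ under the strong temporal correlations of the trajectory. Using the block-Toeplitz structure of the $B_t$'s and the geometric series $\sum_{k\ge 0}\rho(A)^{2k}\le(1-\rho(A)^2)^{-1}$, I expect to obtain $\|M_{ij}\|_{op}=\mathcal{O}(1/[(T-m)(1-\rho(A))^2])$ and $\|M_{ij}\|_F=\mathcal{O}(1/[\sqrt{T-m}\,(1-\rho(A))^2])$. These reflect the intuition that correlations between $x_t$ and $x_{t+k}$ decay like $\rho(A)^k$, so the effective number of independent samples is of order $T$, modulo a correlation-length factor $1/(1-\rho(A))$ absorbed into the constants. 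The restriction $\epsilon<4\sigma^2/(1-\rho(A))^2$ in the statement is exactly what forces the sub-Gaussian term $\sigma^2\|M_{ij}\|_F\sqrt{s}$ to dominate the sub-exponential term $\sigma^2\|M_{ij}\|_{op}s$ in Hanson--Wright, yielding the clean $1/\sqrt{T}$ concentration rate. Combining the entrywise concentration bound with the bias estimate and requiring each to be at most $\epsilon/2$ then produces the stated threshold on $T$.
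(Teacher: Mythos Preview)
Your proposal is correct and follows essentially the same route as the paper: bias--fluctuation split, each entry written as a centered Gaussian quadratic form in the stacked noise, spectral control of the kernel, sub-exponential concentration, and a union bound over the $N^2$ entries. The paper's presentation differs only cosmetically---it writes the form as $X^\top G_{ij}X$ with $X=L\Xi$ (so your $M_{ij}$ is exactly $L^\top G_{ij}L$), derives the tail directly from the MGF $\det(I-2u\sigma^2 L^\top G_{ij}L)^{-1/2}$ rather than citing Hanson--Wright, and obtains the operator-norm bound via Gershgorin on $G_{ij}$ and on $(LL^\top)^{-1}$; the one step you should make more explicit is the Frobenius estimate $\|M_{ij}\|_F=\mathcal{O}\!\bigl(T^{-1/2}(1-\rho(A))^{-2}\bigr)$, which the paper gets from the low-rank observation $\mathrm{rank}(L^\top G_{ij}L)\le 2T$ combined with the operator-norm bound, not from the geometric series alone.
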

This result implies that both $A$ and its higher-order powers can be estimated with a fixed precision in element-wise maximum norm, i.e., $\|A\|_{\max}:=\max_{i,j}|a_{i,j}|$, using only $\mathcal{O}(\log N)$ samples. Importantly, these higher-order moment estimates also enable a natural extension of our estimator to the partially observed setting, as we will see immediately.

\subsection{Learning partially observed linear dynamical systems}
We consider systems with both observed and hidden components. Let the state vector $x_t \in \mathbb{R}^N$ be partitioned into its observed component $x_{t,o} \in \mathbb{R}^{n_o}$ and its hidden (or latent) component $x_{t,h} \in \mathbb{R}^{n_h}$, such that $x_t = [x_{t,o}^\top, x_{t,h}^\top]^\top$ and $n = n_o + n_h$. This partitioning imposes a corresponding block structure on the system dynamics described in \eqref{eq: model},
\begin{align}
    \label{eq: model (partial obs)}
    \begin{bmatrix}
        x_{t,o} \\ x_{t,h}
    \end{bmatrix}
    =
    \underbrace{\begin{bmatrix}
        B & C \\ C^\top & E
    \end{bmatrix}}_{A}
    \begin{bmatrix}
        x_{t-1,o} \\ x_{t-1,h}
    \end{bmatrix} + 
    \begin{bmatrix}
        \xi_{t-1,o} \\ \xi_{t-1,h}
    \end{bmatrix}, 
\end{align}
where $B \in \mathbb{R}^{n_o \times n_o}$ and $E \in \mathbb{R}^{n_h \times n_h}$ represent the internal dynamics of the observed and hidden subsystems, respectively, while $C \in \mathbb{R}^{n_o \times n_h}$ and $C^\top$ describe the coupling between them.

Since only $\{x_{t,o}\}_{t=0}^{T-1}$ is observable, we could still compute the submatrix of $\hat{\mathbb{S}}_m(T)$ restricted to the observed coordinates:
\begin{align}
    \label{eq:estimator (partial obs)}
    [\hat{\mathbb{S}}_m(T)]_{\{i,j\}\in \mathcal{O}} = \frac{1}{T-m}\sum_{t=0}^{T-m-1}{x}_{t,o}{x}_{t+m,o}^\top - \frac{1}{T-m-2}\sum_{t=0}^{T-m-3}{x}_{t,o}{x}_{t+m+2,o}^\top
\end{align}
where $\mathcal{O}=\{1,\dots,n_o\}$ denotes the set of observed coordinates. By Theorem~\ref{Theorem: main}, $[\hat{\mathbb{S}}_m(T)]_{\{i,j\}\in \mathcal{O}}$ estimates $[A^m\sigma^2]_{\{i,j\}\in \mathcal{O}}$ with accuracy $\epsilon$. For small $m$, we obtain explicitly:
\begin{align*}
    &\|[\hat{\mathbb{S}}_0(T)]_{\{i,j\}\in \mathcal{O}} - I\sigma^2\|_{\max} \le \epsilon \\
    &\|[\hat{\mathbb{S}}_1(T)]_{\{i,j\}\in \mathcal{O}}- B\sigma^2\|_{\max} \le \epsilon \\
    &\|[\hat{\mathbb{S}}_2(T)]_{\{i,j\}\in \mathcal{O}} - (B^2+CC^\top)\sigma^2\|_{\max} \le \epsilon \\
    &\|[\hat{\mathbb{S}}_3(T)]_{\{i,j\}\in \mathcal{O}} - (B^3+CC^\top B+BCC^\top+CEC^\top)\sigma^2\|_{\max} \le \epsilon. 
\end{align*}
These relations reveal that sub-matrices $\{B, C, E\}$ can be reconstructed successively from low-order moment estimates.

\begin{corollary}
\label{corollary1}
\textbf{(Recovery under Partial Observations):} Consider problem of learning the system dynamics from an observed trajectory $\{x_{t,o}\}_{t=0,\dots,T}$, generated by the model \eqref{eq: model (partial obs)}. Let $[\hat{\mathbb{S}}_m(T)]_{\{i,j\}\in \mathcal{O}}$ be calculated as in \eqref{eq:estimator (partial obs)} and define the following block estimators:
\begin{itemize}
    \item $\hat{\sigma}^2 \coloneqq \frac{1}{n_o}\mathrm{Tr}\big([\hat{\mathbb{S}}_0(T)]_{\{i,j\}\in \mathcal{O}} \big)$
    \item $\hat{B} \coloneqq \frac{1}{\hat{\sigma}^2} [\hat{\mathbb{S}}_1(T)]_{\{i,j\}\in \mathcal{O}} $
    \item $\widehat{CC^\top} \coloneqq \frac{1}{\hat{\sigma}^2} [\hat{\mathbb{S}}_2(T)]_{\{i,j\}\in \mathcal{O}}  - \hat{B}^2$
    \item $\widehat{CEC^\top} \coloneqq \frac{1}{\hat{\sigma}^2} [\hat{\mathbb{S}}_3(T)]_{\{i,j\}\in \mathcal{O}}  - \hat{B}^3 - \widehat{CC^\top}\hat{B} - \hat{B}\widehat{CC^\top}$
\end{itemize}

Let $\kappa = \max\{64\sigma^2,32^2\}$, $0<\epsilon <\frac{\sigma^2}{2}$, and $\delta >0$. Then, with probability of at least $1-\delta$,
\begin{itemize} 
    \item If $T \ge \max\left(\frac{\kappa\Big(1+2\sqrt{\log(2N^2/\delta)}\Big)^2}{\epsilon^2(1-\rho(A))^4},  \,\,  6 \right)$, then $|\hat{\sigma}^2 - \sigma^2| \le \epsilon$ and $\|\hat{B} - B\|_{\max} \le \epsilon$. 
    \item If $T \ge \max\left(\frac{20^2\kappa N^2\Big(1+2\sqrt{\log(2N^2/\delta)}\Big)^2}{\epsilon^2(1-\rho(A))^4}, \, 8 \right)$, then $\|\widehat{CC^\top} - CC^\top\|_{\max} \le \epsilon$.
    \item If $T \ge \max\left(\frac{140^2\kappa N^4\Big(1+2\sqrt{\log(2N^2/\delta)}\Big)^2}{\epsilon^2(1-\rho(A))^4}, \, 10 \right)$, then $\|\widehat{CEC^\top} - CEC^\top\|_{\max} \le \epsilon$.
\end{itemize}
\end{corollary}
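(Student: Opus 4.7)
The plan is to apply Theorem~\ref{Theorem: main} four times, once for each $m \in \{0,1,2,3\}$, with precision parameters $\epsilon_m$ tailored so that, after propagation through the algebraic definitions of $\hat{\sigma}^2, \hat{B}, \widehat{CC^\top}, \widehat{CEC^\top}$, the final entrywise accuracy is $\epsilon$. A union bound over the four failure events (replacing $\delta$ by $\delta/4$ is absorbed into $\kappa$) yields all max-norm bounds simultaneously with probability at least $1-\delta$. The algebraic side relies on the block identities
\begin{align*}
[A]_{\mathcal{O}} = B, \quad [A^2]_{\mathcal{O}} = B^2+CC^\top, \quad [A^3]_{\mathcal{O}} = B^3 + BCC^\top + CC^\top B + CEC^\top,
\end{align*}
which follow from direct multiplication of the block form of $A$. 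Throughout, I exploit that $\|B\|_{\max} \le \|A\|_{\max}\le \|A\|_2=\rho(A)<1$ (since $A$ is symmetric and $B$ is a principal submatrix) and similarly $\|CC^\top\|_{\max}\le \|C\|_2^2\le \rho(A)^2<1$, together with the elementary inequality $\|XY\|_{\max}\le N\|X\|_{\max}\|Y\|_{\max}$.

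For the first bullet, $\hat{\sigma}^2 = \tfrac{1}{n_o}\Tr([\hat{\mathbb{S}}_0(T)]_{\mathcal{O}})$ averages $n_o$ diagonal entries each within $\epsilon_0$ of $\sigma^2$, so $|\hat{\sigma}^2-\sigma^2|\le\epsilon_0$. For $\hat{B}$ I decompose
\begin{align*}
\hat{B} - B = \frac{1}{\hat{\sigma}^2}\bigl([\hat{\mathbb{S}}_1(T)]_{\mathcal{O}} - \sigma^2 B\bigr) + B\,\frac{\sigma^2-\hat{\sigma}^2}{\hat{\sigma}^2}.
\end{align*}
The assumption $\epsilon<\sigma^2/2$ combined with $\epsilon_0\le\epsilon$ forces $\hat{\sigma}^2\ge\sigma^2/2$, so the two summands are bounded by $2\epsilon_1/\sigma^2$ and $2\epsilon_0/\sigma^2$ respectively. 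Choosing $\epsilon_m = \Theta(\sigma^2\epsilon)$ and substituting into Theorem~\ref{Theorem: main} yields the first sample-complexity bound, with the $\sigma^4$ in Theorem~\ref{Theorem: main} combining with $(\sigma^2)^{-2}$ to produce the constant $\kappa=\max\{64\sigma^2, 32^2\}$ after accounting for both constraints $|\hat{\sigma}^2-\sigma^2|\le\epsilon$ and $\|\hat{B}-B\|_{\max}\le\epsilon$.

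The second and third bullets use the same ratio decomposition but must absorb matrix products via the max-norm inequality, each introducing one factor of $N$. For $\widehat{CC^\top}-CC^\top$,
\begin{align*}
\widehat{CC^\top}-CC^\top = \frac{1}{\hat{\sigma}^2}\bigl([\hat{\mathbb{S}}_2(T)]_{\mathcal{O}} - \sigma^2(B^2+CC^\top)\bigr) + (B^2+CC^\top)\frac{\sigma^2-\hat{\sigma}^2}{\hat{\sigma}^2} - (\hat{B}^2-B^2),
\end{align*}
with $\hat{B}^2-B^2 = (\hat{B}-B)\hat{B} + B(\hat{B}-B)$ contributing a single factor of $N$. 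Matching to $\epsilon$ therefore requires $\epsilon_m = \Theta(\sigma^2\epsilon/N)$, giving the $N^2$ factor in $T$. For $\widehat{CEC^\top}-CEC^\top$, the analogous expansion uses
\begin{align*}
\hat{B}^3 - B^3 &= (\hat{B}-B)\hat{B}^2 + B(\hat{B}-B)\hat{B} + B^2(\hat{B}-B), \\
\widehat{CC^\top}\hat{B} - CC^\top B &= (\widehat{CC^\top}-CC^\top)\hat{B} + CC^\top(\hat{B}-B),
\end{align*}
and the symmetric expansion for $\hat{B}\widehat{CC^\top}-BCC^\top$. Two layers of max-norm multiplication accumulate an $N^2$ factor overall, and since $\widehat{CC^\top}-CC^\top$ itself is only accurate up to $O(N\|\hat{B}-B\|_{\max})$, one needs $\epsilon_m = \Theta(\sigma^2\epsilon/N^2)$, explaining the $N^4$ in the final sample bound.

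The main obstacle will be bookkeeping: the constants produced by each layer of error propagation must collapse into the single uniform $\kappa$ stated in all three bullets. Concretely, I will verify (i) that $\hat{\sigma}^2$ stays bounded below by $\sigma^2/2$ with high probability so that the $1/\hat{\sigma}^2$ factor is stable, (ii) that intermediate max-norms $\|\hat{B}\|_{\max}$ and $\|\widehat{CC^\top}\|_{\max}$ are $O(1)$ throughout, so the $\|XY\|_{\max}\le N\|X\|_{\max}\|Y\|_{\max}$ expansion does not hide an extra $N$-dependent factor, and (iii) that the precision $\epsilon_m$ chosen for each application of Theorem~\ref{Theorem: main} is the strictest constraint across its occurrences in the error decomposition, so that a single invocation of the theorem produces the claimed $T$ threshold. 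No ingredient beyond Theorem~\ref{Theorem: main} and the max-norm matrix product inequality is required.
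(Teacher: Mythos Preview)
Your proposal is correct and follows essentially the same route as the paper: both arguments invoke Theorem~\ref{Theorem: main} on the observed block for $m=0,1,2,3$, use the block identities $[A^m]_{\mathcal{O}}$, exploit $\hat{\sigma}^2\ge\sigma^2/2$ to control the $1/\hat{\sigma}^2$ factor, and propagate errors through the product formulas via $\|XY\|_{\max}\le N\|X\|_{\max}\|Y\|_{\max}$. The only organizational difference is that the paper works backward (fixing a single $\bar{\epsilon}=\bar{\epsilon}(T)$ for all $m$, bounding each block error as a multiple of $\bar{\epsilon}$, then inverting), whereas you work forward by choosing the $\epsilon_m$; also, you are slightly more careful than the paper about the union bound over the four values of $m$.
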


Corollary~\ref{corollary1} shows that the observed subsystem $B$ of the system dynamics can be recovered using only $\mathcal{O}(\log N)$ samples, while accurate estimation of $CC^\top$ and $CEC^\top$ requires $\mathcal{O}(N^2\log N)$ and $\mathcal{O}(N^4\log N)$ samples, respectively. Note that the submatrices $C$ and $E$ themselves can be recovered only up to a linear transformation from $CC^\top$ and $CEC^\top$. This is an unavoidable consequence of an intrinsic symmetry of the recovery problem with partial observations. Indeed, if $O$ is an arbitrary orthogonal matrix, then the distribution of $\{x_{t,o}\}_{t=0,\dots,T}$ is invariant under the transformation $\tilde{C}=CO$, $\tilde{E}=O^\top EO$. Note that the distribution of $\{x_{t,h}\}_{t=0,\dots,T}$ is mapped to the distribution of $\tilde{x}_{t,h}= O^{\top}x_{t,h}$, but since $x_{t,h}$ is unobserved, $\tilde{C}$ and $\tilde{E}$ are undistinguishable from $C$ and $E$. This undistinguishability manifests itself in our estimators through $CC^\top = \tilde{C}\tilde{C}^\top$ and $CEC^\top = \tilde{C}\tilde{E}\tilde{C}^\top$.

% using standard matrix decomposition techniques such as singular value decomposition (see, e.g., \cite{oymak2021revisiting,simchowitz2019learning,sarkar2021finite,tsiamis2019finite,bakshi2023new}). 

% Note that exact recovery of $C$ and $E$ is impossible due to the problem’s inherent ill-posedness, and $CC^\top$ and $CE^kC^\top$ are sufficient to reproduce the exact system dynamics. 

\section{Analysis}
\label{sec:analysis}

This section provides the proofs of the main results. We first show that the proposed estimator is asymptotically unbiased (Theorem~\ref{Theorem: expectation}). Then, we prove a number of auxiliary lemmas and establish a finite-sample, elementwise concentration bound of the estimator (Lemma~\ref{lem:Chernoff elementwise}), which leads to Theorem~\ref{Theorem: main}. Finally, we give the proof of Corollary~\ref{corollary1} for the partially observed setting.

\medskip
\noindent\textit{Notation.} Throughout this section, $\|\cdot\|_{\max}$ denotes the elementwise maximum norm, $\|\cdot\|_2$ the spectral norm, $\rho(\cdot)$ the spectral radius, and $\lambda_i(M)$ the eigenvalues of matrix $M$. Our analyses will frequently use the definition of the linear system in \eqref{eq: model} and of the estimator in \eqref{eq: estimator} defined in the Methods section.

\subsection{Proof of Theorem~\ref{Theorem: expectation}}
\begin{proof}
By iterating \eqref{eq: model}, we have
\(
% x_t = A^t x_0 + \sum_{u=0}^{t-1} A^{t-1-u}\xi_u.
x_t = \sum_{u=0}^{t-1} A^{t-1-u}\xi_u
\)
For $s>t$,
\begin{align*}
% \mathbb{E}[x_tx_s^\top] &= A^t \mathbb{E}[x_0x_0^\top] A^t + \sum_{u=0}^{t-1}\sum_{v=0}^{s-1} A^{t-1-u} \mathbb{E}[\xi_{u}\xi_{v}^\top] A^{s-1-v}  \\
\mathbb{E}[x_tx_s^\top] &= \sum_{u=0}^{t-1}\sum_{v=0}^{s-1} A^{t-1-u} \mathbb{E}[\xi_{u}\xi_{v}^\top] A^{s-1-v}  \\
&= \sigma^2\sum_{u=0}^{t-1} A^{t-1-u}A^{s-1-u} = \sigma^2 A^{s-t} \sum_{u=0}^{t-1} A^{2(t-1-u)} =  \sigma^2 A^{s-t}  \sum_{t'=0}^{t-1} A^{2t'}.
\end{align*}
Then, using the eigendecomposition of $A=UDU^\top$, we have
\begin{align*}
      \mathbb{E}[x_tx_s^\top] = \sigma^2  A^{s-t}  U^{-1} \big(\sum_{t'=0}^{t-1} D^{2t'} \Big)U  = \sigma^2 A^{s-t}  U^{-1} (I-D^{2t})(I-D^2)^{-1} U =  \sigma^2 A^{s-t} (I-A^{2t})(I-A^2)^{-1}.
\end{align*}
In forming $(I-D^2)^{-1}$, we used $\rho(A)<1$. Substituting it into \eqref{eq: estimator} and collecting terms yields
\[
\mathbb{E}[\hat{\mathbb{S}}_m(T)]
= \sigma^2 A^m
+ \frac{1}{T-m}A^m\sigma^2(A^{2(T-m)}-I)(I-A^2)^{-2}
- \frac{1}{T-m-2}A^{m+2}\sigma^2(A^{2(T-m-2)}-I)(I-A^2)^{-2}.
\]
Since $\rho(A)<1$, the bias term is $h_m(T)=\mathcal{O}(1/T)$, proving the claim.
\end{proof}

\subsection{Auxiliary Lemmas}

\begin{lemma}[Quadratic Representation]
\label{lem:quadratic_form}
Let $X=[x_1^\top,\dots,x_{T-1}^\top]^\top$. For any $i,j$,
\(
[\hat{\mathbb{S}}_m(T)]_{ij}=X^\top G_{ij}X,
\)
where $G_{ij}\in\mathbb{R}^{n(T-1)\times n(T-1)}$ is a symmetric block-banded matrix, with nonzero blocks corresponding to lags $m$
and $m+2$ in \eqref{eq: estimator}, and entries given by $c_1=\frac{1}{2(T-m)}$ and $c_2=\frac{1}{2(T-m-2)}$.
\end{lemma}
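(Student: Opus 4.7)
The plan is to expand the $(i,j)$ entry of the estimator explicitly, recognize each summand as a monomial in entries of $X$, and package those monomials into one symmetric, block-banded matrix. First, since $x_0=0$, both sums in \eqref{eq: estimator} are unchanged when the index $t=0$ is dropped, so every factor that actually appears is an entry of $X=[x_1^\top,\dots,x_{T-1}^\top]^\top$. This gives
\[
[\hat{\mathbb{S}}_m(T)]_{ij} \;=\; \frac{1}{T-m}\sum_{t=1}^{T-m-1} x_{t,i}\, x_{t+m,j} \;-\; \frac{1}{T-m-2}\sum_{t=1}^{T-m-3} x_{t,i}\, x_{t+m+2,j}.
\]

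Next, I would use the elementary identity $x_{s,a}x_{r,b}=X^\top E^{(s,a),(r,b)} X$, where $E^{(s,a),(r,b)}\in\mathbb{R}^{n(T-1)\times n(T-1)}$ is the matrix with a single $1$ at row index $(s,a)$ and column index $(r,b)$. Since $X^\top E X = X^\top E^\top X$, replacing $E$ by the symmetric hull $\tfrac12(E+E^\top)$ preserves the bilinear form while symmetrizing its coefficient. Defining $G_{ij}$ as the linear combination
\[
G_{ij} \;=\; \frac{1}{T-m}\sum_{t=1}^{T-m-1}\tfrac12\bigl(E^{(t,i),(t+m,j)}+E^{(t+m,j),(t,i)}\bigr) \;-\; \frac{1}{T-m-2}\sum_{t=1}^{T-m-3}\tfrac12\bigl(E^{(t,i),(t+m+2,j)}+E^{(t+m+2,j),(t,i)}\bigr),
\]
one obtains $X^\top G_{ij} X = [\hat{\mathbb{S}}_m(T)]_{ij}$ by linearity. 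Each nonzero entry of $G_{ij}$ equals $\pm c_1=\pm\tfrac{1}{2(T-m)}$ or $\mp c_2=\mp\tfrac{1}{2(T-m-2)}$, and every such entry lies in one of the two symmetric block-diagonals at time offsets $m$ and $m+2$, which is precisely the block-banded structure asserted in the lemma; symmetry $G_{ij}=G_{ij}^\top$ holds by construction.

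The proof is essentially bookkeeping and I do not expect a real obstacle. The one point that deserves a line of care is the possibility that two symmetric contributions collide: this happens in the degenerate cases $m=0$ with $i=j$ (the lag-$m$ block becomes diagonal), and can in principle also happen if a lag-$m$ off-diagonal position coincides with a lag-$(m+2)$ one for some pair of indices; in those cases the listed coefficients must be added rather than duplicated, but this does not change the resulting quadratic form or the structural claim.
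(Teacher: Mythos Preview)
Your proof is correct and follows essentially the same direct-computation route as the paper: expand the $(i,j)$ entry of the estimator, recognize each product $x_{t,i}x_{t+m,j}$ as a quadratic form in $X$, symmetrize, and collect into a block-banded $G_{ij}$. The paper presents the result via an explicit block-matrix picture with off-diagonal $N\times N$ blocks $e_1^{ij},e_2^{ij}$, whereas you assemble $G_{ij}$ from symmetrized elementary matrices and are a bit more explicit about dropping the $t=0$ term (using $x_0=0$) and about the collision cases, but the substance is identical.
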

\begin{proof}
A direct computation yields $[\hat{\mathbb{S}}_m(T)]_{ij} = \frac{1}{T-m}\sum_{t=0}^{T-m-1}x_{t,i}x_{t+m,j} - \frac{1}{T-m-2}\sum_{t=0}^{T-m-3}{x}_{t,i}{x}_{t+m+2,j}$
\setcounter{MaxMatrixCols}{8}
\begin{align*}
    = \underbrace{\begin{bmatrix} x_1^\top & x_2^\top & \cdots & x_{T-1}^\top \end{bmatrix}}_{\mathlarger{X^\top}}
    \setlength\arraycolsep{3pt}
    \underbrace{
    \begingroup\renewcommand*{\arraystretch}{1.7}
    \begin{bmatrix} \begin{array}{c|c|c|c|c|c|c|c} 
    \bm{0} & \cdots & e_1^{ij} & \bm{0} & e_2^{ij} & \bm{0} & \cdots & \bm{0} \\ \hline \vdots & \ddots & & \ddots & & \ddots & & \vdots \\ \hline e_1^{ij} & & \ddots & & \ddots & & \ddots & \bm{0} \\ \hline \bm{0} & \ddots & & \ddots & & \ddots & & e_2^{ij} \\ \hline e_2^{ij} & & \ddots & & \ddots & & \ddots & \bm{0} \\ \hline \bm{0} & \ddots & & \ddots & & \ddots & & e_1^{ij} \\ \hline \vdots & & \ddots & & \ddots & & \ddots & \vdots \\ \hline \bm{0} & \cdots & \bm{0} & e_2^{ij} & \bm{0} & e_1^{ij} & \cdots & \bm{0} 
    \end{array} 
    \end{bmatrix}
    \endgroup}_{\mathlarger{G_{ij}}}  
    \underbrace{\begin{bmatrix}
       x_{1} \\x_{2}\\  \vdots \\ x_{T-1}  
    \end{bmatrix} }_{\mathlarger{X}},
\end{align*}
where $e_1^{ij}$ and $e_2^{ij}$
% \textcolor{red}{[Replaced with a standard notation in the literature]}
denote $N\times N$ matrices that are zero everywhere except for the entry at position $(i,j)$, where they have elements $\frac{1}{2(T-m)}$ and $\frac{1}{2(T-m-2)}$, respectively. \\
\end{proof}

\begin{lemma}[Trajectory Distribution]
\label{lem:traj_distribution}
Let $X=[x_1^\top,\dots,x_{T-1}^\top]^\top$, then
\(
X\sim \mathcal{N}(0,\ \sigma^2 L L^\top),
\)
where $L$ is a block lower-triangular matrix with powers of $A$ on its sub-diagonals.
\end{lemma}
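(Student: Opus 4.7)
The plan is to explicitly express the stacked trajectory $X$ as a linear transformation of the stacked noise vector and then invoke the stability of the Gaussian family under linear maps. Concretely, the proof of Theorem~\ref{Theorem: expectation} already established the unfolding $x_t = \sum_{u=0}^{t-1} A^{t-1-u}\xi_u$. I will stack these relations for $t=1,\dots,T-1$ into matrix form $X = L\,\Xi$, where $\Xi = [\xi_0^\top,\xi_1^\top,\dots,\xi_{T-2}^\top]^\top \in \mathbb{R}^{N(T-1)}$ and $L \in \mathbb{R}^{N(T-1)\times N(T-1)}$ is the block lower-triangular matrix whose $(t,u)$-block is $A^{t-1-u}$ for $u\le t-1$ and the zero block otherwise. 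In particular, $L$ has identity matrices on the diagonal and successive powers $A, A^2, \dots, A^{T-2}$ on its sub-diagonals, matching the claimed structure.

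Next, I would invoke the distributional assumption: since the $\{\xi_u\}_{u=0}^{T-2}$ are i.i.d.\ with $\xi_u \sim \mathcal{N}(0,\sigma^2 I_N)$, the concatenated vector satisfies $\Xi \sim \mathcal{N}(0,\sigma^2 I_{N(T-1)})$. Applying the standard linear-transformation rule for Gaussians to $X = L\Xi$ yields $X \sim \mathcal{N}(0, L(\sigma^2 I)L^\top) = \mathcal{N}(0,\sigma^2 L L^\top)$, which is exactly the claimed distribution. The initial condition $x_0 = 0$ ensures that no additional mean term enters.

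There is no real obstacle here; the only care required is in bookkeeping the indices of $L$ so that its sub-diagonal blocks genuinely correspond to the powers $A^k$ appearing in the unfolded solution, and in verifying that $L$ is square of the right size ($N(T-1)\times N(T-1)$) so that the covariance $\sigma^2 LL^\top$ is well-defined. Since both of these are immediate from the construction, the lemma follows directly and will serve as the distributional input for the Hanson–Wright-type concentration analysis used subsequently to control the quadratic form $X^\top G_{ij} X$ from Lemma~\ref{lem:quadratic_form}.
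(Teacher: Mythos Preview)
Your proposal is correct and essentially identical to the paper's own proof: both write the stacked trajectory as $X = L\,\Xi$ with $L$ the block lower-triangular matrix of powers of $A$, and then apply the linear transformation rule for Gaussians to $\Xi \sim \mathcal{N}(0,\sigma^2 I)$. The only cosmetic difference is that the paper displays the block matrix $L$ explicitly rather than describing its $(t,u)$-block in words.
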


\begin{proof}
\noindent From $x_{k+1} = A x_k + \xi_k$, we have 
\begin{align*}
    \underbrace{\begin{bmatrix}
        x_1 \\ x_2 \\  \vdots \\ x_{T-1} 
    \end{bmatrix} }_{\mathlarger{X}} =
    \underbrace{\begin{bmatrix}  
    I \\    
    A & I \\
    A^2 & A & I \\
    \vdots & &  \ddots & \ddots\\
    A^{T-2} & \cdots & \cdots & A & I
    \end{bmatrix}}_{\mathlarger{L}}
    \underbrace{\begin{bmatrix}
        \xi_0 \\ \xi_1 \\  \vdots \\ \xi_{T-2} 
    \end{bmatrix}. }_{\mathlarger{\Xi}}
\end{align*}
Since $\Xi \sim \mathcal{N}(0,\, \sigma^2 I)$, we have $X \sim \mathcal{N}(0,\, \sigma^2 LL^\top )$. \\
\end{proof}

\begin{lemma}[Spectral Radius Bound]
\label{lem:spectral_bound}
Let $\bar{\lambda} := \big(\frac{1}{T-m}+\frac{1}{T-m-2}\big)\frac{1}{(1-\rho(A))^2}$. Then
\(
\rho(L^\top G_{ij}L)\le \bar{\lambda}.
\)
\end{lemma}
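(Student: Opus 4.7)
Since $G_{ij}$ is symmetric by construction (Lemma~\ref{lem:quadratic_form}), so is $M := L^\top G_{ij} L$, and therefore $\rho(M) = \|M\|_2$. My plan is to pass to the spectral norm and bound by submultiplicativity,
\[
\rho(L^\top G_{ij} L) \;=\; \|L^\top G_{ij} L\|_2 \;\le\; \|L\|_2^{\,2}\,\|G_{ij}\|_2,
\]
and then to bound each factor in closed form so that their product matches $\bar{\lambda}$, namely $\|L\|_2\le(1-\rho(A))^{-1}$ and $\|G_{ij}\|_2 \le \tfrac{1}{T-m}+\tfrac{1}{T-m-2}$.

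The bound on $\|L\|_2$ rests on the Kronecker decomposition $L = \sum_{k=0}^{T-2} S^k \otimes A^k$, where $S$ is the $(T-1)\times(T-1)$ nilpotent lower-shift matrix; this reproduces the block-subdiagonal structure of $L$ from Lemma~\ref{lem:traj_distribution}. Using $\|S^k\|_2\le 1$, the identity $\|M\otimes N\|_2=\|M\|_2\|N\|_2$, and the equality $\|A^k\|_2=\rho(A)^k$ (which is where the symmetry hypothesis on $A$ enters crucially), the triangle inequality gives $\|L\|_2 \le \sum_{k=0}^{T-2}\rho(A)^k \le (1-\rho(A))^{-1}$.

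For $\|G_{ij}\|_2$, I would split $G_{ij}$ along its two lag scales as $G_{ij}=G^{(m)}_{ij} - G^{(m+2)}_{ij}$, reflecting the two terms of the estimator, and write each piece as $c_k(P_k+P_k^\top)$ with $c_m=\tfrac{1}{2(T-m)}$, $c_{m+2}=\tfrac{1}{2(T-m-2)}$, and $P_k$ the $\{0,1\}$-matrix having single $1$'s at positions $((t-1)N+i,(t+k-1)N+j)$ for $t=1,\dots,T-k-1$. Each $P_k$ has at most one nonzero per row and per column, hence is a partial permutation matrix with $\|P_k\|_2\le 1$, so $\|G_{ij}^{(k)}\|_2 \le 2c_k = 1/(T-k)$; the triangle inequality then yields $\|G_{ij}\|_2 \le \tfrac{1}{T-m}+\tfrac{1}{T-m-2}$.

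The only real obstacle is spotting these two structural facts. Without them one would likely overshoot $\bar{\lambda}$ by a factor $\sqrt{T-m}$ via a Frobenius or row-sum estimate on $G_{ij}$, or would lose the symmetry-based identity $\|A^k\|_2=\rho(A)^k$ that is what actually keeps the geometric series convergent under the hypothesis $\rho(A)<1$ alone. Once both decompositions are in hand, multiplying the two estimates produces exactly $\bar{\lambda}$.
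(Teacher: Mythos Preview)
Your argument is correct and reaches exactly the same bound as the paper, but by a different route. The paper also starts from $\rho(L^\top G_{ij}L)\le\|G_{ij}\|_2\,\|L\|_2^{2}$, yet for each factor it argues via Gershgorin: for $G_{ij}$ it uses that every row has absolute sum at most $2c_1+2c_2$; for $\|L\|_2^{2}=\lambda_{\max}(LL^\top)$ it first writes down the closed form of $(LL^\top)^{-1}$ (the block tridiagonal $\{-A,\,I+A^2,\,-A\}$ matrix), block-diagonalizes it with the eigenbasis of $A$, and applies Gershgorin again to get $\lambda_{\min}((LL^\top)^{-1})\ge(1-\rho(A))^2$. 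Your partial-permutation decomposition of $G_{ij}$ and the Gershgorin row-sum argument are essentially two phrasings of the same sparsity observation; the genuine difference is in the treatment of $L$. Your Kronecker/geometric-series bound $\|L\|_2\le\sum_{k\ge0}\|S^k\|_2\|A^k\|_2\le(1-\rho(A))^{-1}$ is more elementary and sidesteps the need to know the explicit inverse of $LL^\top$, while the paper's approach, though requiring that extra structural fact, gives a direct handle on $\lambda_{\max}(LL^\top)$ itself and would generalize more readily if one later needed two-sided spectral control of the trajectory covariance. Both rely in the same essential way on the symmetry of $A$ (you via $\|A^k\|_2=\rho(A)^k$, the paper via the orthogonal diagonalization $A=UDU^\top$).
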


\begin{proof}
\noindent By sub-multiplicativity of the spectral norm, we have
\begin{align*}
    \rho(L^\top G_{ij} L) = \max_i |\lambda_i(L^\top G_{ij} L)|   = \sigma_{\max} (L^\top G_{ij} L) \le \sigma_{\max}(G_{ij}) \sigma_{\max}^2(L) = \sigma_{\max}(G_{ij}) \lambda_{\max}(LL^\top). 
\end{align*}
From the block-banded structure of $G_{ij}$ and Gershgorin circle theorem,
\begin{align*}
    \sigma_{\max}(G_{ij}) = \max_{i} |\lambda_i(G_{ij})| \le \frac{1}{T-m} + \frac{1}{T-m-2}.
\end{align*}
To upper bound $\lambda_{\max}(LL^\top)$, we consider the lower bound of $\lambda_{\min}\big((LL^\top)^{-1}\big)$. By rotating $(LL^\top)^{-1}$, 
\begin{align*}
\setlength{\arraycolsep}{5pt}
    \begin{bmatrix}
        U & & \\
         & \ddots & \\
         & & U
    \end{bmatrix} 
    \underbrace{\begin{bmatrix}
       I{+}A^2 & -A &    0    \\
       -A & \ddots & \ddots & \ddots \\
          0     & \ddots & \ddots & \ddots & 0  \\
                  & \ddots & \ddots & I{+}A^2  & -A\\
                &        &     0     &-A & I 
    \end{bmatrix}}_{(LL^\top)^{-1}}  
    \begin{bmatrix}
        U^\top & & \\
         & \ddots & \\
         & & U^\top
    \end{bmatrix}  =  \underbrace{\begin{bmatrix}
        I{+}D^2 & -D &    0    \\
       -D & \ddots & \ddots & \ddots \\
          0     & \ddots & \ddots & \ddots & 0  \\
                  & \ddots & \ddots & I{+}D^2  & -D\\
                &        &     0     & -D & I
    \end{bmatrix}}_{\mathcal{M}},
\end{align*}
\noindent and applying Gershgorin circle theorem to $\mathcal{M}$, we have that each eigenvalue of $\mathcal{M}$ is lower bounded by $(1-|\lambda_i(A)|)^2$. Thus, $\lambda_{\min}\big((LL^\top)^{-1}\big) \ge (1-\rho(A))^2$ and hence $\lambda_{\max}(LL^\top) \le  \frac{1}{(1-\rho(A))^2}$. 

Combining the bounds yields, $\rho(L^\top G_{ij} L) \le \Big( \frac{1}{T-m} + \frac{1}{T-m-2} \Big) \frac{1}{(1-\rho(A))^2} =: \bar{\lambda}$. \qedhere
\end{proof}

\begin{lemma}[Log-Determinant Trace Identity]
\label{lemma:Log-determinant trace}
If $H$ is diagonalizable with $|\lambda_i(H)|<1$, then
\(
-\log\det(I-H)=\mathrm{Tr}\!\left(\sum_{k=1}^\infty \frac{H^k}{k}\right).
\)
\end{lemma}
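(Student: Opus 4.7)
The plan is to reduce everything to a scalar Taylor series applied to the eigenvalues, using diagonalizability of $H$ to commute the relevant functional calculus with $\det$ and $\mathrm{Tr}$.

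First I would write $H=PDP^{-1}$ with $D=\mathrm{diag}(\lambda_1,\dots,\lambda_n)$, so that $I-H=P(I-D)P^{-1}$ and hence
\[
-\log\det(I-H) \;=\; -\log\prod_{i=1}^n (1-\lambda_i) \;=\; -\sum_{i=1}^n \log(1-\lambda_i).
\]
Since $|\lambda_i|<1$ for every $i$, each scalar logarithm admits the Mercator expansion $-\log(1-\lambda_i)=\sum_{k=1}^\infty \lambda_i^k/k$, which converges absolutely. Summing over $i$ and swapping the two (absolutely convergent) sums gives
\[
-\log\det(I-H) \;=\; \sum_{k=1}^\infty \frac{1}{k}\sum_{i=1}^n \lambda_i^k \;=\; \sum_{k=1}^\infty \frac{\mathrm{Tr}(H^k)}{k},
\]
where I used $\mathrm{Tr}(H^k)=\mathrm{Tr}(PD^kP^{-1})=\sum_i \lambda_i^k$.

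Finally I would pull the trace outside the series. Because $\rho(H)<1$, the operator series $\sum_{k\ge 1} H^k/k$ converges in any submultiplicative matrix norm (for instance, bound $\|H^k\|\le C\,r^k$ with $r\in(\rho(H),1)$ using the Jordan/diagonal form), so the partial sums converge to a well-defined matrix, and linearity plus continuity of $\mathrm{Tr}$ let me interchange $\mathrm{Tr}$ with the infinite sum, yielding the claimed identity.

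The only genuinely non-routine point is justifying the interchanges of summation and trace; this is handled by absolute/normwise convergence using $|\lambda_i|<1$, so there is no real obstacle.
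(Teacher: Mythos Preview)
Your proof is correct and follows essentially the same route as the paper: diagonalize $H$, apply the scalar Mercator series $-\log(1-\lambda_i)=\sum_{k\ge 1}\lambda_i^k/k$ to each eigenvalue, and then use $\det(I-H)=\prod_i(1-\lambda_i)$ together with $\mathrm{Tr}(H^k)=\sum_i\lambda_i^k$. The only cosmetic difference is the direction of the argument (you start from $-\log\det(I-H)$, while the paper first evaluates the matrix series $\sum_k H^k/k$ via diagonalization and then takes the trace), and you are a bit more explicit about justifying the interchange of trace and summation.
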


\begin{proof}
\noindent For $|x|<1$, the Maclaurin series of $\log(1-x) = -\sum_{k=1}^\infty \frac{x^k}{k}$ (see example 6.4.4 in \cite{bilodeau2010introduction}).

Since $H$ is diagonalizable, $H = P\Lambda P^{-1}$ with $\Lambda = \mathrm{diag}(\lambda_1,\dots,\lambda_n)$. Then
\[
\sum_{k=1}^\infty \frac{H^k}{k}
= P\!\left(\sum_{k=1}^\infty \frac{\Lambda^k}{k}\right)P^{-1}
= -P\,\mathrm{diag}\!\big(\log(1-\lambda_1),\dots,\log(1-\lambda_n)\big)P^{-1}.
\]
Taking traces and using $\Tr(PAP^{-1})=\Tr(A)$ gives
\[
\Tr\!\left(\sum_{k=1}^\infty \frac{H^k}{k}\right)
= -\sum_{i=1}^n \log(1-\lambda_i)
= -\log\!\Big(\prod_{i=1}^n (1-\lambda_i)\Big)
= -\log\det(I-H). \qedhere
\]
\end{proof}

\begin{lemma}[Concentration of Estimator Elements]
\label{lem:Chernoff elementwise}
For any $0<\epsilon <\frac{4\sigma^2}{(1-\rho(A))^2}$, the probability of an element $[\hat{\mathbb{S}}_m(T)]_{ij}$ deviating from its expectation by more than $\epsilon$ is bounded by:
$$
P\left(\left|[\hat{\mathbb{S}}_m(T)]_{ij} - \mathbb{E}[\hat{\mathbb{S}}_m(T)]_{ij}\right| \ge \epsilon\right) \le 2\exp \left(- \frac{\epsilon^2}{16T\bar{\lambda}^2\sigma^4} \right)
$$
where $\bar{\lambda} = \left( \frac{1}{T-m} + \frac{1}{T-m-2} \right) \frac{1}{(1-\rho(A))^2}$. 
\end{lemma}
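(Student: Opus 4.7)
The plan is to realize each element of the estimator as a quadratic form in a standard Gaussian vector and then apply a Hanson--Wright--type Chernoff bound. Combining Lemma~\ref{lem:quadratic_form} and Lemma~\ref{lem:traj_distribution}, and writing $X = \sigma L Z$ with $Z \sim \mathcal{N}(0, I_{N(T-1)})$, I have $[\hat{\mathbb{S}}_m(T)]_{ij} = \sigma^2 Z^\top (L^\top G_{ij} L) Z$. Setting $M := \sigma^2 L^\top G_{ij} L$ and $Y := Z^\top M Z$, the task reduces to a scalar concentration bound for $Y$ around $\mathbb{E}[Y]=\Tr(M)$.

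Next, using the standard Gaussian quadratic-form MGF $\mathbb{E}\exp(\theta Z^\top M Z) = \det(I - 2\theta M)^{-1/2}$, I obtain
\[\log \mathbb{E}\exp\bigl(\theta(Y - \mathbb{E} Y)\bigr) = -\tfrac{1}{2}\log\det(I - 2\theta M) - \theta \Tr(M),\]
valid whenever $I - 2\theta M \succ 0$. Applying Lemma~\ref{lemma:Log-determinant trace} with $H = 2\theta M$ converts the log-determinant into $\Tr\sum_{k\ge 1}(2\theta M)^k/k$; the $k=1$ term cancels $\theta\Tr(M)$, so only $k\ge 2$ terms survive. Lemma~\ref{lem:spectral_bound} gives $\rho(M) \le \sigma^2\bar\lambda$, so restricting $|\theta|\le 1/(4\sigma^2\bar\lambda)$ forces $|2\theta \mu_i|\le 1/2$ for every eigenvalue $\mu_i$ of $M$. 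Using the elementary inequality $\sum_{k\ge 2}|x|^k/k \le x^2$ on $|x|\le 1/2$ and summing over eigenvalues yields
\[\log \mathbb{E}\exp\bigl(\theta(Y - \mathbb{E} Y)\bigr) \le 2\theta^2\,\Tr(M^2) = 2\theta^2\,\|M\|_F^2.\]

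The critical quantitative step is a Frobenius-norm bound $\|M\|_F^2 \le 2T\bar\lambda^2\sigma^4$. For this I rely on $\|L^\top G_{ij}L\|_F^2 = \Tr(G_{ij}LL^\top G_{ij}LL^\top)\le \|LL^\top\|_2^2\,\|G_{ij}\|_F^2$, combine with $\|LL^\top\|_2\le (1-\rho(A))^{-2}$ (shown inside the proof of Lemma~\ref{lem:spectral_bound}), and compute $\|G_{ij}\|_F^2$ directly from its sparse block-banded description: the $\mathcal{O}(T)$ nonzero entries at lags $m$ and $m+2$ contribute $\tfrac12[(T-m)^{-1}+(T-m-2)^{-1}]$, which together with the hypothesis $T\ge 2(m+2)$ delivers the claimed bound. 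Optimizing the Chernoff inequality $\mathbb{P}(Y-\mathbb{E} Y\ge \epsilon)\le \exp(-\theta\epsilon + 2\theta^2\|M\|_F^2)$ at $\theta^\star = \epsilon/(4\|M\|_F^2)$ then produces $\exp(-\epsilon^2/(16T\bar\lambda^2\sigma^4))$; verifying $\theta^\star \le 1/(4\sigma^2\bar\lambda)$ is what translates, via the same Frobenius estimate, into the upper-range condition $\epsilon < 4\sigma^2/(1-\rho(A))^2$, and a symmetric lower-tail argument contributes the factor $2$. The main obstacle I anticipate is precisely this Frobenius estimate: a naive bound $\Tr(M^2)\le \rho(M)^2\dim(M)$ would introduce an extra factor $N(T-1)$ and destroy the $\mathcal{O}(\log N)$ sample complexity of Theorem~\ref{Theorem: main}, so genuinely exploiting the sparse block-banded structure of $G_{ij}$ is what makes the result go through.
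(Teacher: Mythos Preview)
Your argument is correct and reaches the stated bound, but the decisive quantitative step differs from the paper's. Both proofs share the Gaussian MGF, the log-determinant expansion of Lemma~\ref{lemma:Log-determinant trace}, and a Chernoff optimization; they diverge in how the ambient dimension $N(T-1)$ is eliminated. The paper observes that $\mathrm{rank}(L^\top G_{ij}L)\le \mathrm{rank}(G_{ij})\le 2T$ (only $\mathcal{O}(T)$ rows of $G_{ij}$ are nonzero), so together with $\rho(L^\top G_{ij}L)\le\bar\lambda$ one gets $\Tr\big((L^\top G_{ij}L)^k\big)\le 2T\bar\lambda^k$ for \emph{every} $k$, resums the full series, and simplifies via $\log(1+z)-z\le -z^2/4$. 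You instead collapse the tail eigenvalue-by-eigenvalue using $\sum_{k\ge2}|x|^k/k\le x^2$ on $|x|\le 1/2$, reducing everything to the single quantity $\|M\|_F^2$, which you control from the sparsity of $G_{ij}$ via $\|L^\top G_{ij}L\|_F^2\le\|LL^\top\|_2^2\,\|G_{ij}\|_F^2$. Your Frobenius route in fact gives a sharper intermediate constant before you relax it to $2T\bar\lambda^2\sigma^4$; the paper's rank route is a one-line observation that handles all moments uniformly.

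Two small repairs. First, $T\ge 2(m+2)$ is not a hypothesis of this lemma and is not needed for $\|M\|_F^2\le 2T\bar\lambda^2\sigma^4$: that inequality reduces to $\tfrac{1}{T-m}+\tfrac{1}{T-m-2}\ge \tfrac{1}{4T}$, which holds whenever the estimator is defined. Second, as written your feasibility check for $\theta^\star=\epsilon/(4\|M\|_F^2)\le 1/(4\sigma^2\bar\lambda)$ would require a \emph{lower} bound on $\|M\|_F^2$, which you do not have. The clean fix is to substitute the upper bound $2T\bar\lambda^2\sigma^4$ for $\|M\|_F^2$ \emph{before} optimizing; the minimizer becomes $\theta=\epsilon/(8T\bar\lambda^2\sigma^4)$, the exponent is still $-\epsilon^2/(16T\bar\lambda^2\sigma^4)$, and the constraint $\theta\le 1/(4\sigma^2\bar\lambda)$ is precisely $\epsilon\le 2T\bar\lambda\sigma^2$, which follows from $T\bar\lambda\ge 2/(1-\rho(A))^2$ and the assumed range of $\epsilon$.
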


\begin{proof}
\noindent
Since $\rho(L^\top G_{ij}L)\le\bar{\lambda}$ (from Lemma~\ref{lem:spectral_bound}), for any $|u|<\tfrac{1}{2\sigma^2\bar{\lambda}}$, we have $|\lambda_i(2u\sigma^2L^\top G_{ij}L)|<1.$
This condition ensures that the moment generating function
$\mathbb{E}[e^{uX^\top G_{ij}X}]$ exists and can be evaluated, 
\[
\mathbb{E}[e^{uX^\top G_{ij} X}]
= \big(\det(I-2u\sigma^2L^\top G_{ij}L)\big)^{-1/2}
= \exp\!\Big(-\frac{1}{2}\log\det(I-2u\sigma^2L^\top G_{ij}L)\Big).
\]
Applying Lemma~\ref{lemma:Log-determinant trace},
\[
\mathbb{E}[e^{uX^\top G_{ij} X}]
= \exp\!\Big(\frac{1}{2}\mathrm{Tr}\!\sum_{k=1}^\infty
\frac{(2u\sigma^2L^\top G_{ij}L)^k}{k}\Big)=\exp\Big(\frac{1}{2} \Tr\Big(\sum_{k=2}^{\infty} \frac{(2u\sigma^2L^\top G_{ij}L)^k}{k}\Big) + u \sigma^2 \Tr(L^\top GL) \Big) .
\]

Since
$\mathbb{E}[X^\top G_{ij}X]=\sigma^2\mathrm{Tr}(L^\top G_{ij}L)$
and $\mathrm{Tr}\big((L^\top G_{ij}L)^k\big)\le 2T\bar{\lambda}^k$ (due to $\mathrm{rank}(L^\top G_{ij}L)\le 2T$),
\begin{align*}
\mathbb{E}\big[e^{uX^\top G_{ij} X}\big]e^{-u\mathbb{E}[X^\top G_{ij} X]} 
\le \exp\!\Big(\frac{1}{2}\sum_{k=2}^\infty
\frac{(2|u|\sigma^2)^k(2T\bar{\lambda}^k)}{k}\Big)
= \exp\!\Big(-T\log(1-2|u|\bar{\lambda}\sigma^2)-2T|u|\bar{\lambda}\sigma^2\Big).
\end{align*}

Then, by Chernoff's method, we have
\[
P\!\left(\big|[\hat{\mathbb{S}}_m(T)]_{ij}-\mathbb{E}[\hat{\mathbb{S}}_m(T)]_{ij}\big|\ge\epsilon\right)
\le
2\exp\Big( \min_{0<u<\frac{1}{2\bar{\lambda}\sigma^2}}  -T \log(1-2u\bar{\lambda}\sigma^2) - 2Tu\bar{\lambda}\sigma^2  - u\epsilon \Big).
\]
To optimize over $u$, we define
$g(u)=-T\log(1-2u\bar{\lambda}\sigma^2)-2Tu\bar{\lambda}\sigma^2-u\epsilon$. Note that $g(u)$ is strictly convex on $(0,\tfrac{1}{2\bar{\lambda}\sigma^2})$.
Setting $g'(u^*)=0$ gives
\[
u^*=\frac{\epsilon}{2\bar{\lambda}\sigma^2(2T\bar{\lambda}\sigma^2+\epsilon)}.
\]
Substituting $u^*$ yields
\[
P\!\left(\big|[\hat{\mathbb{S}}_m(T)]_{ij}-\mathbb{E}[\hat{\mathbb{S}}_m(T)]_{ij}\big|\ge\epsilon\right)
\le
2\exp\!\Big(
T\big(\log(1+\tfrac{\epsilon}{2T\bar{\lambda}\sigma^2})
-\tfrac{\epsilon}{2T\bar{\lambda}\sigma^2}\big)
\Big).
\]
Finally, using $\log(1+z)-z\le -\tfrac{z^2}{4}$ for $0<z<1$
and noting $\tfrac{\epsilon}{2T\bar{\lambda}\sigma^2}<1$
since $\epsilon<\tfrac{4\sigma^2}{(1-\rho(A))^2}$,
we simplify the expression and obtain
\[
P\!\left(\big|[\hat{\mathbb{S}}_m(T)]_{ij}-\mathbb{E}[\hat{\mathbb{S}}_m(T)]_{ij}\big|\ge\epsilon\right)
\le
2\exp\!\left(-\frac{\epsilon^2}{16T\bar{\lambda}^2\sigma^4}\right).
\qedhere
\]
\end{proof}

\subsection{Proof of Theorem~\ref{Theorem: main}}
\begin{proof}[Proof of Theorem~\ref{Theorem: main}]
By Lemma~\ref{lem:Chernoff elementwise} and a union bound over all pairs $(i,j)$,
\[
P\!\left(\|\hat{\mathbb{S}}_m(T)-\mathbb{E}[\hat{\mathbb{S}}_m(T)]\|_{\max}\ge\epsilon\right)
\le
\sum_{i,j} P\!\left(|[\hat{\mathbb{S}}_m(T)]_{ij}-\mathbb{E}[\hat{\mathbb{S}}_m(T)]_{ij}|\ge\epsilon\right)
\le
2N^2\exp\!\left(-\frac{\epsilon^2}{16T\bar{\lambda}^2\sigma^4}\right).
\]
Hence, with probability at least $1-\delta$,
\[
\|\hat{\mathbb{S}}_m(T)-\mathbb{E}[\hat{\mathbb{S}}_m(T)]\|_{\max}
\le
\sqrt{
16T\sigma^4
\!\left(\frac{1}{T-m}+\frac{1}{T-m-2}\right)^{\!2}
\frac{\log(2N^2/\delta)}{(1-\rho(A))^4}
}
\le
16\sigma^2\sqrt{\frac{\log(2N^2/\delta)}{T(1-\rho(A))^4}}.
\]
The last inequality is due to 
$\frac{1}{T-m}+\frac{1}{T-m-2}\le\frac{4}{T}$, for $T\ge2(m+2)$.

From Theorem~\ref{Theorem: expectation},
$\mathbb{E}[\hat{\mathbb{S}}_m(T)]=A^m\sigma^2+h(A,T)$,
where $\|h(A,T)\|_{\max}\le\frac{8\sigma^2}{T(1-\rho^2(A))^2}$ for $T\ge2(m+2)$.
By the triangle inequality, we have
\[
\|\hat{\mathbb{S}}_m(T)-A^m\sigma^2\|_{\max}
\le
16\sigma^2\sqrt{\frac{\log(2N^2/\delta)}{T(1-\rho(A))^4}}
+\frac{8\sigma^2}{T(1-\rho^2(A))^2} \le \frac{8\sigma^2(1+2\sqrt{\log(2N^2/\delta)})}{(1-\rho(A))^2\sqrt{T}}.
\]
Requiring the right-hand side to be at most $\epsilon$
gives the stated lower bounds on $T$.
Therefore, under the displayed condition on $T$, $\|\hat{\mathbb{S}}_m(T)-A^m\sigma^2\|_{\max}\le\epsilon,
$ with probability at least $1-\delta$.
\qedhere
\end{proof}

\subsection{Proof of Corollary~\ref{corollary1}}

\begin{proof}[Proof of Corollary~\ref{corollary1}]
Let $\hat{S}_m \coloneqq [\hat{\mathbb{S}}_m(T)]_{\{i,j\}\in\mathcal{O}}$. Note that $0<\epsilon <\frac{\sigma^2}{2}$ implies $0<\epsilon <\frac{4\sigma^2}{(1-\rho(A))^2}$ since $\rho(A)<1$. Thus, from the proof of Theorem~\ref{Theorem: main}, we have with probability of at least $1-\delta$,  
\begin{align*}
    \|\hat{S}_m-[A^m\sigma^2]_{\{i,j\}\in\mathcal{O}}\|_{\max}
    \le \frac{8\sigma^2(1+2\sqrt{\log(2N^2/\delta)})}{(1-\rho(A))^2\sqrt{T}} =: \bar{\epsilon}
\end{align*}
for $m=0,1,2,3$ and $T\ge 2(m+2)$. This requires $T\ge 6$, $T\ge 8$, $T\ge 10$, for $m=1,2,3$ respectively.

Now, we will show that the above condition implies with probability of at least $1-\delta$, $|\hat{\sigma}^2 - \sigma^2| \le \bar{\epsilon}$, $\|\hat{B} - B\|_{\max} \le \frac{4\bar{\epsilon}}{\sigma^2}$, $\|\widehat{CC^\top} - CC^\top\|_{\max} \le \frac{20N\bar{\epsilon}}{\sigma^2}$, and $\|\widehat{CEC^\top} - CEC^\top\|_{\max} \le \frac{140N^2\bar{\epsilon}}{\sigma^2}$.

First, since the trace is linear,
\[
|\hat{\sigma}^2 - \sigma^2|
= \left|\frac{1}{n_o}\Tr(\hat{S}_0) - \frac{1}{n_o}\Tr(\sigma^2I_{n_o}) \right|  =  \tfrac{1}{n_o}|\Tr(\hat{S}_0 - \sigma^2 I_{n_o})|
\le \tfrac{1}{n_o}\sum_i |\hat{S}_{0,ii}-\sigma^2|
\le \bar{\epsilon}.
\]
Second, note that \(\|B\|_2,\|C\|_2,\|E\|_2<1.\)
Using $\hat{B}=\hat{S}_1/\hat{\sigma}^2$ and applying triangular inequality to $ \|\hat{B}-B\|_{\max} = \|\frac{1}{\hat{\sigma}^2}(\hat{S}_1-B\sigma^2)  + B\frac{\sigma^2}{\hat{\sigma}^2} - B\|_{\max}$, we have  
    \begin{align*}
        \|\hat{B}-B\|_{\max}  \le \|\frac{1}{\hat{\sigma}^2}(\hat{S}_1-B\sigma^2)\|_{\max} + \|B(\frac{\sigma^2}{\hat{\sigma}^2}-1)\|_{\max} \le \frac{\bar{\epsilon}}{\hat{\sigma}^2} + \|B\|_{\max}\frac{|\sigma^2-\hat{\sigma}^2|}{\hat{\sigma}^2} \le \frac{2\bar{\epsilon}}{\hat{\sigma}^2}  \le \frac{4\bar{\epsilon}}{\sigma^2}.
    \end{align*}
Third, by triangular inequality 
    \begin{align*}
        \hat{\sigma}^2\|\widehat{CC^\top} - CC^\top\|_{\max} &\le \|(\widehat{CC^\top}{+}\hat{B}^2)\hat{\sigma}^2 - (CC^\top{+}B^2)\sigma^2 \|_{\max} + \|B^2\sigma^2 {-} \hat{B}^2\hat{\sigma}^2\|_{\max} + \|CC^\top(\sigma^2{-}\hat{\sigma}^2)\|_{\max},
    \end{align*}
    where each norm is bounded as: $\|(\widehat{CC^\top}+\hat{B}^2)\hat{\sigma}^2 - (CC^\top+B^2)\sigma^2 \|_{\max}\le \bar{\epsilon}$, 
    \begin{align*}
        \|B^2\sigma^2 - \hat{B}^2\hat{\sigma}^2\|_{\max} 
        &\le \|B^2\sigma^2  -B\hat{B}\sigma^2 \|_{\max} + \| B\hat{B}\sigma^2  - \hat{B}^2\sigma^2 \|_{\max} + \| \hat{B}^2\sigma^2 - \hat{B}^2\hat{\sigma}^2\|_{\max} \\
        &\le n_o\|B\|_{\max}\|B-\hat{B}\|_{\max}\sigma^2 + n_o\|B-\hat{B}\|_{\max}\|\hat{B}\|_{\max}\sigma^2 + \|\hat{B}^2\|_{\max} |\sigma^2-\hat{\sigma}^2| \le 9n_o\bar{\epsilon}, 
    \end{align*}
    $\|CC^\top(\sigma^2-\hat{\sigma}^2)\|_{\max} \le n_h\|C\|_{\max}^2|(\sigma^2-\hat{\sigma}^2)| \le n_h\bar{\epsilon}$. Thus, $\|\widehat{CC^\top} - CC^\top\|_{\max} \le \frac{(10n_o+n_h)\bar{\epsilon}}{\hat{\sigma}^2} \le \frac{20N\bar{\epsilon}}{\sigma^2}$.

Similarly, by triangular inequality and simplification, we have $\hat{\sigma}^2\|\widehat{CEC^\top} - CEC^\top\|_{\max} \le \bar{\epsilon} +  n_h^2\|C\|_{\max}^2\|E\|_{\max}|(\sigma^2-\hat{\sigma}^2)| + 9n_o\bar{\epsilon} + \|CC^\top B\sigma^2-\widehat{CC^\top}\hat{B}\hat{\sigma}^2\|_{\max} + \|BCC^\top\sigma^2 - \hat{B}\widehat{CC^\top}\hat{\sigma}^2\|_{\max}$, 
where $\|CC^\top B\sigma^2-\widehat{CC^\top}\hat{B}\hat{\sigma}^2\|_{\max} \le (8n_o n_h + 30n_o^2)\bar{\epsilon}$ and $\|BCC^\top\sigma^2 - \hat{B}\widehat{CC^\top}\hat{\sigma}^2\|_{\max} \le (8n_o n_h + 30n_o^2)\bar{\epsilon}$.
  
Therefore, 
    \begin{align*}
        \|\widehat{CEC^\top} - CEC^\top\|_{\max} \le \frac{(1+n_h^2+9n_o + 16n_o n_h + 60n_o^2)\bar{\epsilon}}{\hat{\sigma}^2}  \le \frac{2(60N^2+10N)\bar{\epsilon}}{\sigma^2} \le \frac{140N^2\bar{\epsilon}}{\sigma^2}.
    \end{align*}

Finally, by requiring $\bar{\epsilon}$ and $\frac{4\bar{\epsilon}}{\sigma^2}$ to be at most $\epsilon$, we arrive at the first stated lower bounds on $T$. Requiring $\frac{20N\bar{\epsilon}}{\sigma^2} \le \epsilon$ yields the second lower bound on $T$, and requiring $\frac{140N^2\bar{\epsilon}}{\sigma^2} \le \epsilon$ yields the third. As a result, under the displayed conditions on $T$, the block estimation errors are bounded by $\epsilon$ with probability of at least $1-\delta$.
\end{proof}

\section{Numerical Experiments}

We evaluate the performance of the proposed estimator on both fully observed and partially observed linear dynamical systems to empirically illustrate our theoretical complexity bounds. All experiments involve symmetrically coupled systems of dimension $N$, with adjacency matrix $A \in \mathbb{R}^{N\times N}$ and standard Gaussian noise driving the dynamics. For each setting, we compare our estimator against two common baselines: the standard least-squares (LS) estimator and the $\ell_1$-regularized least-squares (LS+L1) estimator.

\subsection{Fully Observed Systems}

\begin{figure}[htb!]
    \centering
    \includegraphics[width=0.95\linewidth]{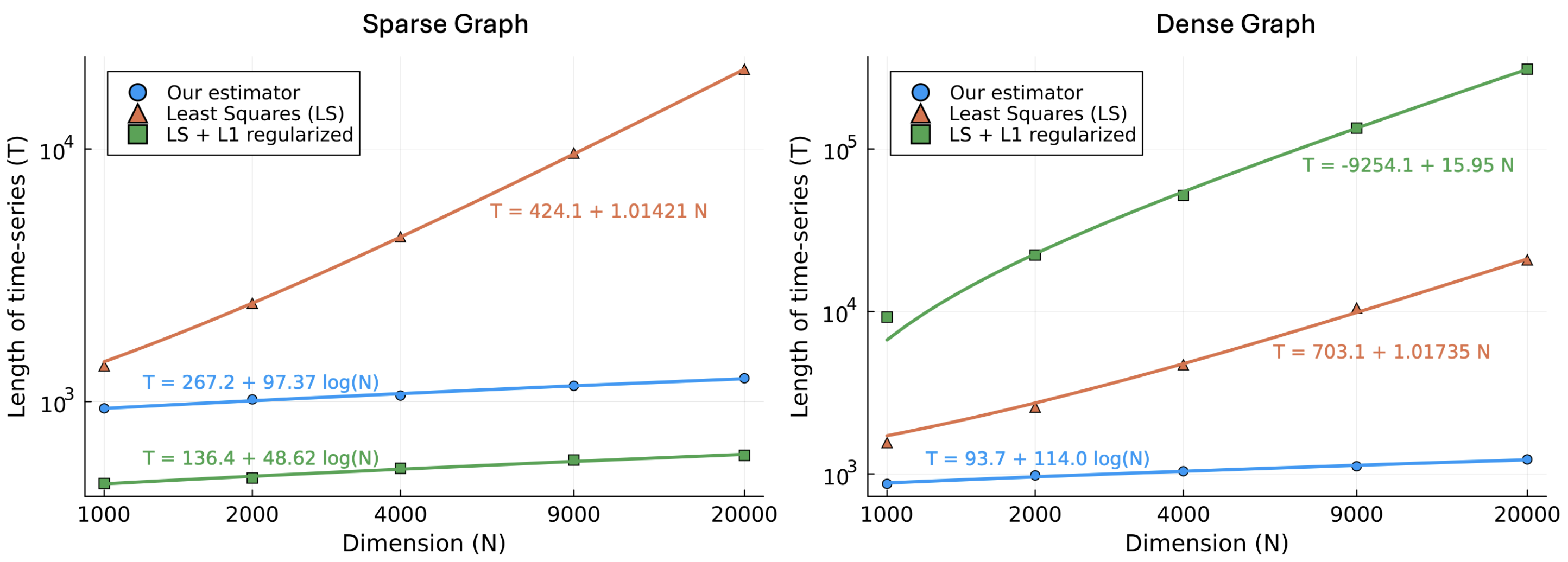}
    \caption{Sample complexity comparison for fully observed systems. 
    (Left) Sparse graph (degree 2). 
    (Right) Dense graph with structured first row/column. The curves represent the best fit of the empirical points. Our estimator and LS+L1 both achieve $\mathcal{O}(\log N)$ scaling in the sparse case, while only our estimator maintains logarithmic scaling in the dense regime.}
    \label{fig:sparse_dense}
\end{figure}

In the fully observed case, we study two classes of symmetric networks distinguished by the structure of the adjacency matrix $A$:\vspace{-0.25cm}
\begin{itemize}
    \item \textbf{Sparse Graph:} Each node (vertex) is connected to exactly two neighbors chosen uniformly at random. The resulting graph has a fixed degree of $2$, yielding a sparse symmetric adjacency matrix $A$ with spectral radius $\rho(A)<1$ after normalization (by a factor of 3).

    \item \textbf{Dense Star Graph:} The adjacency matrix $A$ follows a specific structured form where the first row and column are given by
    \[
    A_{1,:} = A_{:,1}^\top = \Big[\tfrac{1}{\sqrt{5}}, \tfrac{1}{\sqrt{2N}}, \dots, \tfrac{1}{\sqrt{2N}}\Big],
    \]
    and the remaining entries are set to 0. The construction enables us to produce a highly connected graph (where every node is connected to a central node) while preserving bounded spectral radius.
\end{itemize}

For both graphs, we simulate trajectories of varying lengths $T$ and dimensions $N \in [500, 20000]$, estimate $A$ using each tested method, and record the minimal trajectory length needed to achieve a fixed reconstruction accuracy with the error threshold of $0.25$. For each graph, 30 sets of samples is generated in order to test the prediction of our theory in the ``with high probability'' setting. The minimum number of samples that is sufficient to reconstruct the parameters to the threshold accuracy is recorded, and the maximum over 30 trials is reported in Figure~\ref{fig:sparse_dense}.

For the random sparse graph (left panel), both our estimator and the LS+L1 estimator display logarithmic scaling of the required trajectory length $T$ with respect to system dimension $N$, consistent with the theoretical prediction $T = \mathcal{O}(\log N)$. The unregularized LS estimator, however, fails to leverage the underlying sparsity and exhibits linear scaling in $N$. 
In contrast, for the random dense graph (right panel), only our estimator retains the favorable logarithmic dependence on $N$, while both LS+L1 and LS estimators require trajectories that grow linearly with system size. 
These results demonstrate that our method preserves the logarithmic sample complexity in learning both sparsely and densely connected systems, without requiring a problem-specific regularization.

\subsection{Partially Observed Systems}

\begin{figure}[htb!]
    \centering
    \includegraphics[width=0.95\linewidth]{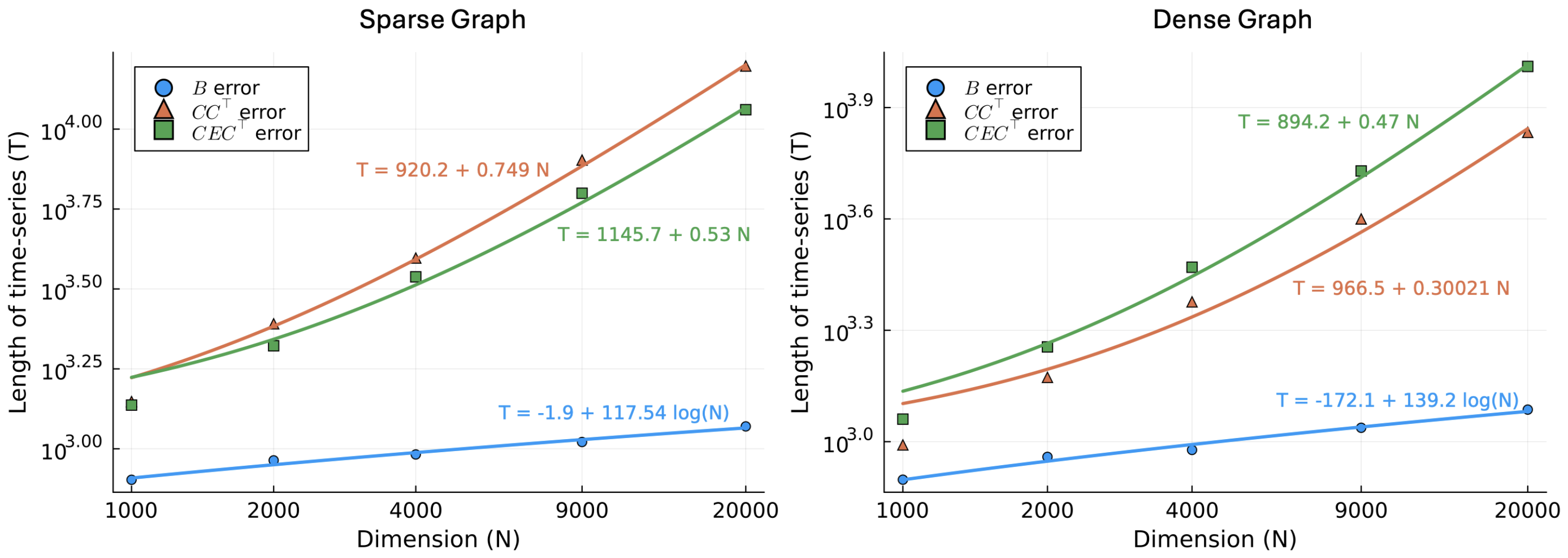}
    \caption{Performance under partial observability. The curves represent the best fit of the empirical points. The observable sub-matrix $B$ of the system matrix is recovered with $\mathcal{O}(\log N)$ complexity, while recovery of Markov parameters mixing other other sub-blocks involving hidden dimensions show an empirical scale of $\mathcal{O}(N \log N)$.}
    \label{fig:partial_obs}
\end{figure}

We next evaluate the estimator under partial observability, where only a subset of the system states is measured. In our experiments below, we choose $n_o = N/2$. The estimation performance in this regime is characterized by Corollary~\ref{corollary1}, which shows that the observed subsystem $B$ of the system matrix can be recovered using only $\mathcal{O}(\log N)$ samples, while accurate estimation of $CC^\top$ and $CEC^\top$ requires $\mathcal{O}(N^2\log N)$ and $\mathcal{O}(N^4\log N)$ samples, respectively.

In our numerical experiments, we compute the estimation errors of $B$, $CC^\top$, and $CEC^\top$, and the minimal trajectory length $T$ needed for these errors to fall below the error threshold of $0.25$. Each experiment is again repeated over 30 sets of samples, and the maximum of minimum $T$ needed to achive the required threshold error over 30 trials is recorded.

Figure~\ref{fig:partial_obs} presents the empirical scaling for accurate estimation of $B$, $CC^\top$, and $CEC^\top$, for two graph systems used and described earlier in the fully observed case. 
Across both sparse and dense systems, the minimal trajectory length $T$ required for accurate estimation of the observed block $B$ grows with logarithmic complexity in $N$, while trajectory length required for the unobserved terms $CC^\top$ and $CEC^\top$ empirically scales as $T = \mathcal{O}(N\log N)$. These numerical results show that in practice the scaling of $CC^\top$ and $CEC^\top$ recovery shows a better rate than the more conservative theoretical scaling predicted by Corollary~\ref{corollary1}.
% Even in the dense graph case—where the coupling between observed and hidden nodes is strong—our estimator remains robust and effective.

% Our results establishes a new empirical benchmark for finite-sample identification of large-scale linear dynamical systems.

\section{Conclusion}
In this work, we introduced a new estimator for learning parameters of stable and symmetrically coupled systems linear dynamical systems from finite data. Focusing on the maximum element-wise norm recovery which is relevant for applications such as structure discovery, our analysis showed that the observed part of the dynamic state matrix can be recovered using the length of the time series that scales only as $T = \mathcal{O}(\log N)$. Importantly, the estimator does not require regularization, and works for both sparse and dense dynamic state matrices. In future work, it would be interesting to explore versions of the new estimator that would work for marginally stable systems and non-symmetrically coupled matrices, maintaining efficient scaling rates.

\section*{Acknowledgements}
The authors acknowledge fruitful discussions with Melvyn Tyloo and Mateusz Wilinski, and thank Khanh Dang for providing the scientific data. This work has been supported by the U.S. Department of Energy/Office of Electricity Advanced Sensor and Data Analytics and Transmission Reliability and Operations (TRO) programs, by the Department of Energy/Office of Electricity Advanced Grid Modeling program, by the U.S. Department of Energy/Office of Science Advanced Scientific Computing Research program, and by the Information Science \& Technology Institute at Los Alamos National Laboratory.

\bibliographystyle{alpha}
\bibliography{bibfile}

\end{document}